\newcommand{\vertiii}[1]{{\left\vert\kern-0.25ex\left\vert\kern-0.25ex\left\vert #1 
    \right\vert\kern-0.25ex\right\vert\kern-0.25ex\right\vert}}
\theoremstyle{definition}
\newtheorem{lemma}{Lemma}
\newtheorem{proposition}{Proposition}
\theoremstyle{definition} 
\theoremstyle{definition}
\begin{document}
%
\title{Iteratively Reweighted Least Squares Algorithms for L1-Norm Principal Component Analysis}

\author{\IEEEauthorblockN{Young Woong Park}
\IEEEauthorblockA{Cox School of Business\\ 
Southern Methodist University\\
Dallas, Texas 75225\\
Email: ywpark@smu.edu}
\and
\IEEEauthorblockN{Diego Klabjan}
\IEEEauthorblockA{Department of Industrial Engineering\\
 and Management Sciences\\
Northwestern University\\
Evanston, Illinois 60208\\
Email: d-klabjan@northwestern.edu}}


%


\maketitle

\begin{abstract}
Principal component analysis (PCA) is often used to reduce the dimension of data by selecting a few orthonormal vectors that explain most of the variance structure of the data. L1 PCA uses the L1 norm to measure error, whereas the conventional PCA uses the L2 norm. For the L1 PCA problem minimizing the fitting error of the reconstructed data, we propose an exact reweighted and an approximate algorithm based on iteratively reweighted least squares. We provide convergence analyses, and compare their performance against benchmark algorithms in the literature. The computational experiment shows that the proposed algorithms consistently perform best.
\end{abstract}


%
\IEEEpeerreviewmaketitle

\section{Introduction}
\label{section_introduction}

Principal component analysis (PCA) is a technique to find orthonormal vectors, which are a linear combination of the attributes of the data, that explain the variance structure of the data \cite{Jolliffe:2002}. Since a few orthonormal vectors usually explain most of the variance, PCA is often used to reduce dimension of the data by keeping only a few of the orthonormal vectors. These orthonormal vectors are called \textit{principal components} (PCs). 

For dimensionality reduction, we are given target dimension p, the number of PCs. To measure accuracy, given $p$ principal components, first, the original data is projected into the lower dimension using the PCs. Next, the projected data in the lower dimension is lifted to the original dimension using the PCs. Observe that this procedure causes loss of some information if $p$ is smaller than the dimension of the original attribute space. The reconstruction error is defined by the difference between the projected-and-lifted data and the original data. To select the best $p$ PCs, the following two objective functions are usually used:
\begin{center}
\begin{tabular}{l}
$[\textsf{P1}]$ minimization of the reconstruction error,\\
$[\textsf{P2}]$ maximization of the variance of the projected data.\\
\end{tabular}
\end{center}

The traditional measure to capture the errors and variances in \textsf{P1} and \textsf{P2} is the $L_2$ norm. For each observation, we have the vector of the reconstruction error and variance for \textsf{P1} and \textsf{P2}, respectively. Then, the squared $L_2$ norm of the vectors are added over all observations to define the total reconstruction error and variance for \textsf{P1} and \textsf{P2}, respectively. In fact, in terms of a matrix norm, we optimize the Frobenius norm of the reconstruction error and projected data matrices for \textsf{P1} and \textsf{P2}, respectively. With the $L_2$ norm as the objective function, \textsf{P1} and \textsf{P2} are actually equivalent. Further, \textsf{P2} can be efficiently solved by singular value decomposition (SVD) of the data matrix or the eigenvalue decomposition (EVD) of the covariance matrix of the data. However, the $L_2$ norm is usually sensitive to outliers. As an alternative, PCA based on $L_1$ norm has been used to find more robust PCs. 

For \textsf{P1}, instead of the $L_2$ norm, we minimize the sum of the $L_1$ norm of the reconstruction error vectors over all observations. A few heuristics have been proposed for this minimization problem. The heuristic proposed in \cite{Baccini-etal:96} is based on a canonical correlation analysis. The iterative algorithm in \cite{Ke-Kanade:05} assumes that the projected-and-lifted data is a product of two matrices and is then iteratively optimizing by fixing one of the two matrices. The algorithm in \cite{Brooks-etal:12} sequentially reduces the dimension by one. The algorithm is based on the observation that the projection from $k$ to the best fit $k-1$ dimension can be found by solving several linear program (LP) problems for least absolute deviation regression. The algorithms in \cite{Ke-Kanade:05} and \cite{Brooks-etal:12} actually try to find the best fitting subspace, where in the objective function the original data is approximated by the multiplication of two matrices, PC and score matrices. This approximation is not the same as the reconstructed matrix by PCs, while the ultimate goal is still minimizing the reconstruction errors. 

The $L_1$ norm for \textsf{P2} has also been studied. This problem is often called the \textit{projection pursuit $L_1$-PCA}. In this context, we maximize the sum of the $L_1$ norm of the projected observation vectors over all observations. However, in contrast to the conventional $L_2$ norm based PCA, the solutions of \textsf{P1} and \textsf{P2} with the $L_1$ norm might not be same. The work in \cite{Galpin-Hawkins:87} studies $L_1$-norm based covariance matrix estimation, while the works in \cite{Choulakian:06,Croux2005,Kwak:08,Li-Chen1985} and \cite{Nie-etal:11} directly consider \textsf{P2}. The algorithm in \cite{Kwak:08} finds a local optimal solution by sequentially obtaining one PC that is orthogonal to the previously obtained PCs based on a greedy strategy. Recently, the work in \cite{Nie-etal:11} extended the algorithm in \cite{Kwak:08} using a non-greedy strategy. The works in \cite{Markopoulos:14} and \cite{McCoy:11} show that \textsf{P2} with one PC is NP-hard when the number of observations and attributes are jointly arbitrarily large. The work in \cite{Markopoulos:14} provides a polynomial algorithm when the number of attributes is fixed.

As the objective functions are different, solving for \textsf{P1} and \textsf{P2} with different norms give different solutions in terms of the signs and order of the PCs. In Figure \ref{fig:cancer}, we present heat maps of PCs obtained by solving $L_2$-PCA, \textsf{P1} with the $L_1$ norm, and \textsf{P2} with the $L_1$ norm with $p = 5$ for data set \textit{cancer\_2} presented in Table \ref{tab:uca_data} and used in the experiment in Section \ref{section_exp_uci}. The three heat maps represent the matrices of PCs of $L_2$-PCA (left), \textsf{P1} with the $L_1$ norm (center), and \textsf{P2} with the $L_1$ norm (right). The rows and columns of each matrix represent original attributes and PCs, respectively. The blue, white, and red cells represent the intensity of positive, zero, and negative values. Note that both $L_1$-PCA variations give Attr 1 a large negative loading in either of the first or second PCs, whereas $L_2$-PCA gives Attr 1 a large positive loading in the third PC. Attr 9 has a large negative loading for the fifth PC of \textsf{P1} with $L_1$ norm, while $L_2$-PCA gives Attr 9 a large positive loading in the second PC. 

\begin{figure}[h]
\centering
\includegraphics[width=0.48\textwidth]{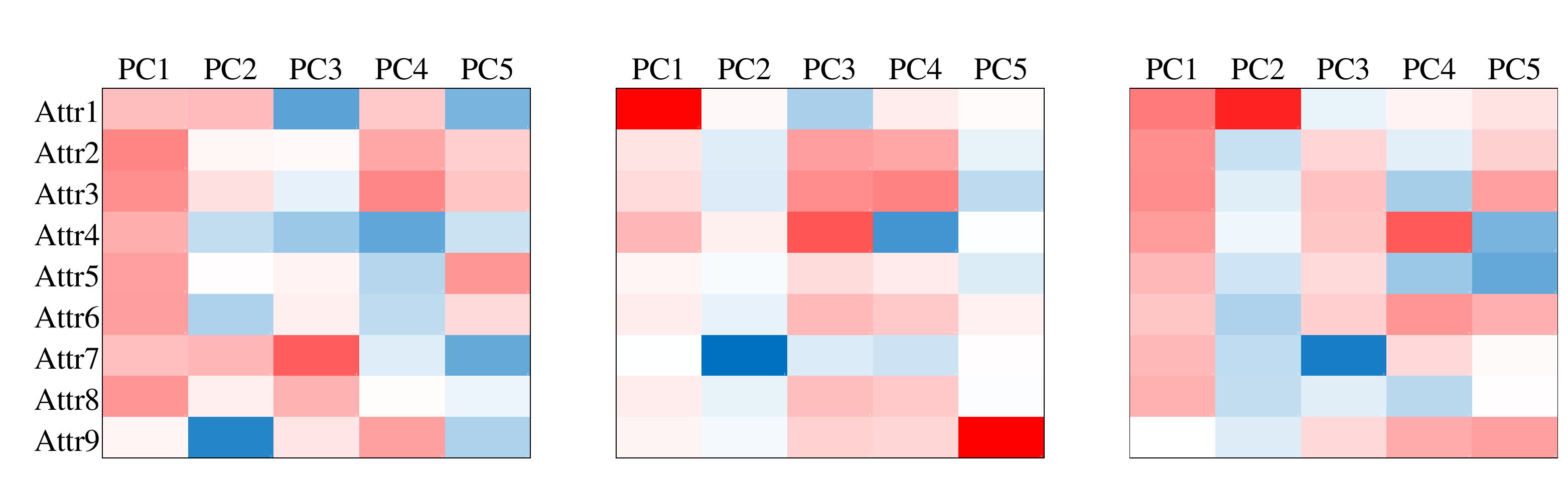}
\caption{Heat maps of 5 principal components by $L_2$-PCA (left), \textsf{P1} with $L_1$ norm (center), and \textsf{P2} with $L_1$ norm (right) for \textit{cancer\_2} data set }
\label{fig:cancer}
\end{figure}

In this paper, we propose two iterative algorithms for \textsf{P1} with the $L_1$ norm, and provide analytical convergence results. Although we propose iterative algorithms, our focus is the $L_1$ objective function and thus our algorithms do not directly compare with iterative algorithms for the standard $L_2$-PCA such as algorithms based on EM \cite{Roweis:98} or NIPALS \cite{Geladi-Kowalski:86}. For \textsf{P1} with the $L_1$ norm, the first proposed algorithm, the exact reweighted version, is based on iteratively reweighted least squares (IRLS) that gives a weight to each observation. The weighted least square problem is then converted into the standard $L_2$-PCA problem with a weighted data matrix, and the algorithm iterates over different weights. We show that the algorithm gives convergent weights and the weighted data matrices have convergent eigenvalues. We also propose an approximate version in order to speed up the algorithm and show the convergent eigenvectors. The work in \cite{Daubechies:2010} provides an IRLS algorithm for sparse recovery. The reader is referred to \cite{Jorgensen:06} for a review of IRLS applied in different contexts. Recently, the work in \cite{Wen-Yin} studied the minimization of a differentiable function of an orthogonal matrix. In the computational experiment, we compare our algorithms with benchmark algorithms in \cite{Brooks-etal:12, Ke-Kanade:05, Kwak:08,Nie-etal:11}. The experiment shows that our algorithm for \textsf{P1} regularly outperforms the benchmark algorithms with respect to the solution quality and its computational time is of the same order as the fastest among the other four. Even though $L_1$-PCA can be used for building robust PCs and is an alternative to other robust PCA approaches such as the work in \cite{candes:11}, we limit the comparison for the $L_1$-PCA objective functions introduced in Section \ref{section_algorithms}, as our goal is to directly optimize the $L_1$ norms for \textsf{P1}.

Our contributions can be summarized as follows.
\begin{enumerate}[noitemsep]
\item For \textsf{P1} with the $L_1$ norm, we propose the first IRLS algorithm in the literature, and show the convergence of the eigenvalues of the weighted matrices. The algorithm directly minimizes the reconstruction error, while the other benchmark algorithms primarily try to find the optimal subspace. An additional advantage of our algorithm is that it uses an $L_2$-PCA algorithm as a black box. Hence, by using a more scalable algorithm, the practical time complexity of our algorithm can further be reduced.
\item We propose an approximate version to speed up the algorithm and to guarantee convergent eigenvectors. The  difference is that the approximate version uses a formula to update the eigenpairs when the changes in the weighted data matrices become relatively small.
\item The results of the computational experiment show that the proposed algorithms for \textsf{P1} outperform the benchmark algorithms in most cases.
\end{enumerate}

The rest of the paper is organized as follows. In Section \ref{section_algorithms}, we present the algorithms and show all analytic results. Then in Section \ref{section_computational_experiment}, the computational experiment is presented.

\section{Algorithms for L1 PCA}
\label{section_algorithms}

In this section, we present the algorithms for \textsf{P1} and show the underlying analytic results. We use the following notation.
\begin{enumerate}[noitemsep]
\item[-] $n$: number of observations of the data
\item[-] $m$: number of attributes of the data
\item[-] $p$: number of principal components (target dimension)
\item[-] $I = \{1,\cdots, n \}$: index set of the observations
\item[-] $J = \{1,\cdots,m\}$: index set of the attributes
\item[-] $P =\{1,\cdots,p\}$: index set of the PCs
\item[-] $A \in \mathbb{R}^{n \times m} $: data matrix with elements $a_{ij}$ for $i \in I, j \in J$
\item[-] $X \in \mathbb{R}^{m \times p}$: principal components matrix with elements $x_{jk}$ for $j \in J, k \in P$
\item[-] $Y \in \mathbb{R}^{n \times p}$: projected data matrix with elements $y_{ik}$ for $i \in I, k \in P$, defined as $Y = AX$
\item[-] $E \in \mathbb{R}^{n \times m}$: reconstruction error matrix with elements $e_{ij}$ for $i \in I, j \in J$, defined as $E= A - Y X^{\top}$
\item[-] $I_k \in \mathbb{R}^{k \times k}$: $k$ by $k$ identity matrix 
\end{enumerate}
For a matrix $R \in \mathbb{R}^{n_r \times m_r}$ with elements $r_{ij}$, $i=1,\cdots, n_r$, $j=1,\cdots,m_r$, we denote by  
\begin{center}
$\| R \|_{F} = \sqrt{\sum_{i=1}^{n_r} \sum_{j=1}^{m_r} r_{ij}^2}$
\end{center}
the Frobenius norm.

The conventional PCA problem, \textsf{P1} with the $L_2$ norm, can be written as
\begin{equation}
\label{definition_L2PCA}
\min_{X \in \mathbb{R}^{m \times p}, X^{\top} X  = I_p} \| A - AX X^{\top} \|_{F}^2.
\end{equation}
Note that $X^{\top} X  = I_p$ is different from $X X^{\top} \in \mathbb{R}^{m \times m}$ in the objective function. 

We consider \eqref{definition_L2PCA} with the $L_1$ norm instead of the $L_2$ norm in the objective function. The resulting \textsf{P1} problem is written as
\begin{equation}
\label{definition_L1PCA_minError}
\min_{X \in \mathbb{R}^{m \times p}} \sum_{i \in I} \sum_{j \in J} |e_{ij}| \mbox{ s.t. } X^{\top} X  = I_p, E = A - AXX^{\top}.
\end{equation}

Next we present an iterative algorithm for \eqref{definition_L1PCA_minError} to minimize the reconstruction error. Instead of solving \eqref{definition_L1PCA_minError} directly, we iteratively solve a weighted version of \eqref{definition_L2PCA} by giving a different weight for each observation. 

We rewrite \eqref{definition_L1PCA_minError} in the following non-matrix form.
\begin{subequations}
\label{formulation_PCA_L1_minError}
\begin{align}
\min \quad & \displaystyle \sum_{i \in I} \sum_{j \in J} |e_{ij}| \label{formulation_PCA_L1_minError_a} \\
s.t.\quad& \displaystyle y_{ik} = \sum_{j \in J} a_{ij} x_{jk}, \qquad \qquad i \in I, k \in P, \label{formulation_PCA_L1_minError_b} \\
& \displaystyle e_{ij} = a_{ij} - \sum_{k \in P} y_{ik} x_{jk}, \quad i \in I, j \in J, \label{formulation_PCA_L1_minError_c} \\
& \displaystyle \sum_{j \in J} x_{jk} x_{jq} = 0, \qquad \qquad k \in P,q \in P, k < q,   \label{formulation_PCA_L1_minError_d}\\
& \displaystyle \sum_{j \in J} x_{jk} x_{jk} = 1, \qquad \qquad k \in P, \label{formulation_PCA_L1_minError_e}\\
& E, X, Y \mbox{ unconstrained} \label{formulation_PCA_L1_minError_f}.
\end{align}
\end{subequations}
Note that the corresponding $L_2$-PCA problem \eqref{definition_L2PCA} can be also written as
\begin{equation}
\label{formulation_PCA_L2_minError}
\displaystyle \min_{E,X,Y} \sum_{i \in I} \sum_{j \in J} e_{ij}^2 \quad \mbox{s.t.} \quad \eqref{formulation_PCA_L1_minError_b} - \eqref{formulation_PCA_L1_minError_f},
\end{equation} 
since the only difference between \eqref{definition_L2PCA} and \eqref{definition_L1PCA_minError} is the objective function. However, there is no known algorithm that solves \eqref{formulation_PCA_L1_minError} optimally, whereas \eqref{formulation_PCA_L2_minError} can be solved by SVD or EVD. Hence, we want to take advantage of the fact that \eqref{formulation_PCA_L2_minError} can be optimally solved. 

Let us consider a weighted version of \eqref{formulation_PCA_L2_minError}:
\begin{equation}
\label{formulation_PCA_weighted_L2_minError_final}
\displaystyle g(w) = \min_{E,X,Y} \sum_{i \in I} \sum_{j \in J} w_{i}e_{ij}^2 \quad \mbox{s.t.} \quad \eqref{formulation_PCA_L1_minError_b} - \eqref{formulation_PCA_L1_minError_f},
\end{equation}
with $w_i >0$ for every $i$ in $I$. Note that \eqref{formulation_PCA_L2_minError} and \eqref{formulation_PCA_weighted_L2_minError_final} are equivalent when $w_i=1$ for all $i$ in $I$. However, solving \eqref{formulation_PCA_weighted_L2_minError_final} with non-constant $w_i$'s is not easy in its original form due to the orthogonality constraint. Instead, let us define weighted data matrix $\bar{A} \in \mathbb{R}^{n \times m}$ with each element defined as
\begin{equation}
\label{definition_a_bar}
\bar{a}_{ij} = \sqrt{w_{i}} a_{ij},  \mbox{ for }  i \in I, j \in J.
\end{equation}
In the following proposition, we show that an optimal solution to \eqref{formulation_PCA_weighted_L2_minError_final} can be obtained by SVD of $\bar{A}$.

\begin{proposition}
\label{proposition_weighted_PCA_equivalence}
Solving \eqref{formulation_PCA_weighted_L2_minError_final} with $A$ is equivalent to solving \eqref{formulation_PCA_L2_minError} with $\bar{A}$.
\end{proposition}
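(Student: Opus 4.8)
The plan is to show that, for any fixed principal-component matrix $X$, the objective of the weighted problem \eqref{formulation_PCA_weighted_L2_minError_final} evaluated with the original data $A$ coincides \emph{exactly} with the objective of the unweighted problem \eqref{formulation_PCA_L2_minError} evaluated with the scaled data $\bar{A}$. Since the feasible region for $X$ — namely $X^{\top}X = I_p$ encoded by \eqref{formulation_PCA_L1_minError_d}–\eqref{formulation_PCA_L1_minError_e} — is identical in both problems, and the remaining variables $Y$ and $E$ are uniquely determined by $X$ through the equality constraints \eqref{formulation_PCA_L1_minError_b}–\eqref{formulation_PCA_L1_minError_c}, a pointwise agreement of the two objectives over this common feasible set immediately yields equivalence.

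First I would make explicit what ``solving \eqref{formulation_PCA_L2_minError} with $\bar{A}$'' means: every occurrence of $a_{ij}$ in the defining constraints \eqref{formulation_PCA_L1_minError_b} and \eqref{formulation_PCA_L1_minError_c} is replaced by $\bar{a}_{ij}$, producing projected values $\bar{y}_{ik}$ and reconstruction errors $\bar{e}_{ij}$, while the orthonormality constraints \eqref{formulation_PCA_L1_minError_d}–\eqref{formulation_PCA_L1_minError_e} on $X$ are untouched. The core computation is then a direct substitution of \eqref{definition_a_bar}. From the projection constraint, $\bar{y}_{ik} = \sum_{j \in J} \bar{a}_{ij} x_{jk} = \sqrt{w_i}\sum_{j \in J} a_{ij} x_{jk} = \sqrt{w_i}\,y_{ik}$, where $y_{ik}$ is the projection of the original row $i$. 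Feeding this into the reconstruction-error constraint gives $\bar{e}_{ij} = \bar{a}_{ij} - \sum_{k \in P} \bar{y}_{ik} x_{jk} = \sqrt{w_i}\bigl(a_{ij} - \sum_{k \in P} y_{ik} x_{jk}\bigr) = \sqrt{w_i}\,e_{ij}$.

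Squaring and summing then yields $\sum_{i \in I}\sum_{j \in J} \bar{e}_{ij}^2 = \sum_{i \in I}\sum_{j \in J} w_i e_{ij}^2$, which is precisely the objective $g(w)$ of \eqref{formulation_PCA_weighted_L2_minError_final}. The conceptual reason this works cleanly is that each observation (row) is scaled by a single common factor $\sqrt{w_i}$, so the scaling commutes with both the linear projection and the reconstruction and factors out of the squared error; a per-attribute scaling would not enjoy this property. I do not anticipate a genuine obstacle, as the argument is a clean algebraic substitution; the only point requiring care is to be explicit that the true decision variable is $X$, ranging over the common feasible set $\{X : X^{\top}X = I_p\}$, since once $X$ is fixed the pairs $(Y,E)$ and $(\bar{Y},\bar{E})$ are forced by the equalities. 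Matching objective values at every feasible $X$ is therefore exactly what equivalence requires, and in particular the optimal $X$ is shared by the two problems — so an SVD of $\bar{A}$ returns an optimal solution of the weighted problem, with the optimal projected matrices differing only by the per-row scaling $\sqrt{w_i}$.
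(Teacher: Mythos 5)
Your proof is correct and follows essentially the same route as the paper's: both rest on the substitution $\bar{y}_{ik}=\sqrt{w_i}\,y_{ik}$, $\bar{e}_{ij}=\sqrt{w_i}\,e_{ij}$ for a common $X$, which makes the two objectives coincide while the orthonormality constraints are unaffected. Your framing of $X$ as the sole decision variable over the shared feasible set is a slightly tidier packaging of the paper's ``map one solution to the other with equal objective value'' argument, but the content is identical.
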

\begin{proof}
Let $(\bar{E}, \bar{X}, \bar{Y})$ be a solution to \eqref{formulation_PCA_L2_minError} with $\bar{A}$. We claim that, for any $(\bar{E}, \bar{X}, \bar{Y})$, there exists $(E,X,Y)$ with the same objective function value for \eqref{formulation_PCA_weighted_L2_minError_final} with $A$, and vice versa. Let $y_{ik} = \frac{1}{\sqrt{w_i}} \bar{y}_{ik}$, $e_{ij} = \frac{1}{\sqrt{w_i}} \bar{e}_{ij}$, and $x_{jk} = \bar{x}_{jk}$. We derive
\vspace{0.2cm}

\begin{tabular}{p{8cm}}
$y_{ik} =  \frac{\bar{y}_{ik}}{\sqrt{w_i}} $ $= \frac{\sum_{j \in J} \bar{a}_{ij} \bar{x}_{jk}}{\sqrt{w_i}} = \frac{\sum_{j \in J} \sqrt{w_i} a_{ij} \bar{x}_{jk}}{\sqrt{w_i}}$,\\
$e_{ij} =  \frac{\bar{e}_{ij}}{\sqrt{w_i}}  =  \frac{\bar{a}_{ij} - \sum_{k \in P} \bar{y}_{ik} \bar{x}_{jk}}{\sqrt{w_i}} = a_{ij} - \sum_{k \in P} y_{ik} \bar{x}_{jk}$ \\
\hspace{0.4cm} $= a_{ij} - \sum_{k \in P} y_{ik} x_{jk} $,\\
$\sum_{i=1}^{n} \sum_{j \in J} \bar{e}_{ij}^2 = \sum_{i=1}^{n} \sum_{j \in J} w_i e_{ij}^2$.
\end{tabular}
\vspace{0.2cm}

Further, since $\bar{x}_{jk} = x_{jk}$ for all $j$ and $k$, orthonormality constraints \eqref{formulation_PCA_L1_minError_d} and \eqref{formulation_PCA_L1_minError_e} are automatically satisfied. Hence, solving \eqref{formulation_PCA_L2_minError} with $\bar{A}$ is equivalent to solving \eqref{formulation_PCA_weighted_L2_minError_final} with $A$.
\end{proof}

Since now we know that \eqref{formulation_PCA_weighted_L2_minError_final} can be solved optimally, the remaining task is to define appropriate weights that give a good solution to \eqref{formulation_PCA_L1_minError}. We first provide intuition behind our choices.

Let $(E^*,X^*,Y^*)$ be an optimal solution to \eqref{formulation_PCA_L1_minError} and let
\begin{equation}
\label{weight_for_opt_error}
w_i^* = \left \{
	\begin{array}{ll}
		\frac{\sum_{j \in J} |e_{ij}^*| }{\sum_{j \in J} (e_{ij}^*)^2},  &  \mbox{if } \sum_{j \in J}  (e_{ij}^*)^2 > 0,\\ 
		M, &  \mbox{if } \sum_{j \in J}  (e_{ij}^*)^2 = 0,
	\end{array}
\right.
\end{equation}
be weights defined based on $E^*$, where $M$ is a large number. Note that the value of $M$, for the case $ \sum_{j \in J}  (e_{ij}^*)^2 = 0$, does not affect the value of $\sum_{i \in I} w_i^* \sum_{j \in J}  (e_{ij}^*)^2$ because $\sum_{j \in J}  (e_{ij}^*)^2=0$ for the corresponding observation. However, considering the fact that we want to give less weight to the outliers in order to reduce their effect on the objective function, it is reasonable to assign a big number to the observations with zero error.

With $w^*$ defined in \eqref{weight_for_opt_error}, it is trivial to show
\begin{equation}
\label{eqn_property_f_star}
\sum_{i \in I} \sum_{j \in J} |e_{ij}^*|  = \sum_{i \in I} w_i^* \sum_{j \in J}  (e_{ij}^*)^2  \mbox{ and }  g(w^*) \leq \sum_{i \in I} \sum_{j \in J} |e_{ij}^*|. 
\end{equation}
The equality in \eqref{eqn_property_f_star} implies that, given $w^*$ and $E^*$, the objective function value of \eqref{formulation_PCA_L1_minError} and \eqref{formulation_PCA_weighted_L2_minError_final} are equal. The inequality in \eqref{eqn_property_f_star} implies that, given $w^*$, the objective function value of \eqref{formulation_PCA_weighted_L2_minError_final} gives a lower bound on the optimal objective function value of \eqref{formulation_PCA_L1_minError}. Hence, we aim to minimize the objective function of \eqref{formulation_PCA_L1_minError} by solving \eqref{formulation_PCA_weighted_L2_minError_final}, hoping $g(w^*)$ and $\sum_{i \in I} \sum_{j \in J} |e_{ij}^* |$ are not far from each other. 

The equality and inequality in \eqref{eqn_property_f_star} give motivation to use a weight formula similar to \eqref{weight_for_opt_error}. Before presenting the weight update formula and the algorithm, let us define the following notation for the algorithm.

\begin{enumerate}[noitemsep]
\item[-] $t$: current iteration
\item[-] $w^t \in \mathbb{R}^{n \times 1}$: weight vector used in iteration $t$ with elements $w_i^t$ for $i \in I$
\item[-] $W_t \in \mathbb{R}^{n \times n}$: diagonal matrix in iteration $t$ with $\sqrt{w_i^t}$'s on the diagonal
\item[-] $A_t \in \mathbb{R}^{n \times m}$: weighted data matrix defined in \eqref{definition_a_bar} with $w^t$ in iteration $t$, defined as $A_t = W_t A$
\item[-] $X_t \in \mathbb{R}^{m \times p}$: the principal component matrix obtained by SVD of $A_t$ in iteration $t$
\item[-] $E_t \in \mathbb{R}^{n \times m}$: reconstruction error matrix in iteration $t$ with elements $e_{ij}^t$ for $i \in I, j \in J$, defined as $E_t = A - AX_t X_t^{\top}$
\item[-] $L2PCA(A_t,p)$: subroutine that returns $X_t \in \mathbb{R}^{m \times p}$ by solving \eqref{formulation_PCA_L2_minError} with $A_t$
\item[-] $F(X_t)$: objective function value of $X_t$ for \eqref{formulation_PCA_L1_minError}, defined as $F(X_t) = \sum_{i \in I} \sum_{j \in J} |e_{ij}^t|$
\item[-] $F^{best}$: current best objective function value
\item[-] $X^{best}$: principal component matrix associated with $F^{best}$
\end{enumerate}
Note that $X_t$ is obtained by SVD of $A_t$, but $E_t = A - AX_t X_t^{\top}$ is based on $A$ and is different from $A_t - A_t X_t X_t^{\top}$. 

Motivated by \eqref{weight_for_opt_error}, for iteration $t+1$, we define 
\begin{equation}
\label{def_temp_weight}
u_{i}^{t+1} = \left \{
	\begin{array}{ll}
		\frac{\sum_{j \in J} | e_{ij}^t | }{\sum_{j \in J}  (e_{ij}^t)^2},  &  \mbox{if } \sum_{j \in J}  (e_{ij}^t)^2 > 0,\\ 
		M_t, &  \mbox{if } \sum_{j \in J}  (e_{ij}^t)^2 = 0,
	\end{array}
\right.
\end{equation}
where $M_t = \max_{i \in I_t^+} \frac{\sum_{j \in J} | e_{ij}^t | }{\sum_{j \in J}  (e_{ij}^t)^2}$ is the largest weight among the observations in $I_t^+ = \{ i \in I | \sum_{j \in J}  (e_{ij}^t)^2 > 0 \}$. Using $u^{t+1}$ in \eqref{def_temp_weight} for the weights is natural and we empirically observe that the algorithm is convergent. However, it is not trivial to show the convergence with $u^{t+1}$ for \eqref{formulation_PCA_weighted_L2_minError_final}. Hence, in order to show the convergence of the algorithm, we present a modified update formula based on $u^{t+1}$. 
\begin{equation}
\label{def_weight}
w_{i}^{t+1} = \left \{
	\begin{array}{ll}
		w_i^t (1-\beta^t),  &  \mbox{if } u_i^{t+1} < w_i^{t} (1-\beta^t),\\  
		u_i^{t+1}, &  \mbox{if } w_i^{t} (1-\beta^t) \leq u_i^{t+1} \leq w_i^{t} (1+\beta^t),\\ 
		w_i^t (1+\beta^t),  &  \mbox{if } u_i^{t+1} > w_i^{t} (1+\beta^t),\\ 
	\end{array}
\right.
\end{equation}
where $\beta \in (0,1)$. Note that $\beta^t$ is the $\beta$ to the power of $t$ and is different from other superscript-containing notations such as $w_i^t$ or $u_i^t$. The role of \eqref{def_weight} is to enable bounds of the change for $w^{t+1}$ from $w^t$. If $u_i^{t+1}$ is too small compared to $w_i^t$, then $w_i^{t+1}$ is assigned a value between $u_i^{t+1}$ and $w_i^t$. If $u_i^{t+1}$ is too large compare to $w_i^t$, then $w_i^{t+1}$ obtains a value between $u_i^{t+1}$ and $w_i^t$. Otherwise, $w_i^{t+1}$ follows the weight formula in \eqref{def_temp_weight}. Given $\beta \in (0,1)$, we have $\lim_{t \rightarrow \infty} \beta^t = 0$, which implies $\lim_{t \rightarrow \infty} w_i^t - w_i^{t+1} = 0$. Further, since $u$ and $w$ are bounded above and below, we can show that $w^t$ is convergent. By setting $\beta$ close to 1, we would have $w_i^{t+1} = u_i^{t+1}$ in most cases, as $1-\beta^t$ and $1+\beta^t$ are close to 0 and 2 for small values of $t$, i.e., early iterations. From all these facts and by using elementary mathematics, the following lemma follows.

\begin{lemma}
\label{lemma_convergent_weight}
With $w^t$ defined in \eqref{def_weight}, $w^t$ and $A_t$ are convergent in $t$.
\end{lemma}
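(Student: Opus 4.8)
The plan is to reduce everything to convergence of each scalar sequence $\{w_i^t\}_t$, since convergence of $A_t = W_t A$ then follows immediately from the continuity of $x \mapsto \sqrt{x}$. The whole argument rests on two structural facts: the clamping in \eqref{def_weight} forces $|w_i^{t+1} - w_i^t| \le w_i^t\,\beta^t$, and the auxiliary weights $u_i^{t+1}$ of \eqref{def_temp_weight} are uniformly bounded below by a positive constant. Granting these, absolute summability of the increments (because $\sum_t \beta^t < \infty$) makes each $\{w_i^t\}_t$ Cauchy.

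First I would establish boundedness of the errors and a positive lower bound on $u$. Since $X_t^{\top} X_t = I_p$, the matrix $X_t X_t^{\top}$ is an orthogonal projection, so $\|E_t\|_F = \|A - A X_t X_t^{\top}\|_F \le 2\|A\|_F$, giving a constant $C$ with $|e_{ij}^t| \le C$ for all $i,j,t$. Hence for every $i \in I_t^+$ we have $\sum_{j \in J} (e_{ij}^t)^2 \le C \sum_{j \in J} |e_{ij}^t|$, so $u_i^{t+1} = \frac{\sum_{j \in J} |e_{ij}^t|}{\sum_{j \in J} (e_{ij}^t)^2} \ge 1/C$; because $M_t$ is the maximum of these ratios, $u_i^{t+1} \ge 1/C$ holds on $I \setminus I_t^+$ as well.

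Next I would derive two-sided bounds on $w^t$. Reading the three cases of \eqref{def_weight} directly yields $w_i^t(1-\beta^t) \le w_i^{t+1} \le w_i^t(1+\beta^t)$, hence the increment bound $|w_i^{t+1} - w_i^t| \le w_i^t\,\beta^t$. Iterating the upper inequality gives $w_i^t \le w_i^1 \prod_{s \ge 1}(1+\beta^s)$, and since $\sum_{s} \beta^s = \beta/(1-\beta) < \infty$ this infinite product converges, so $w_i^t \le \bar{w}$ for a finite constant $\bar{w}$. For the lower bound I would induct: checking the three cases shows $w_i^{t+1} \ge \min\{u_i^{t+1}, w_i^t\}$ (in the first case $w_i^{t+1} = w_i^t(1-\beta^t) > u_i^{t+1} \ge 1/C$), so starting from $w_i^1 = 1 \ge 1/C$ the whole sequence stays $\ge 1/C > 0$.

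Finally, combining the increment bound with the upper bound gives $\sum_t |w_i^{t+1} - w_i^t| \le \bar{w}\sum_t \beta^t < \infty$, so $\{w_i^t\}_t$ is Cauchy and converges to some $w_i^* \ge 1/C$; by continuity of the square root, the entries $\sqrt{w_i^t}\,a_{ij}$ of $A_t$ converge, i.e. $A_t \to W^* A$ with $W^*$ diagonal carrying $\sqrt{w_i^*}$. I expect the boundedness step to be the main obstacle: the ratio defining $u_i^{t+1}$ has no a priori upper bound, since the errors can shrink toward $0$, so one cannot control $w^t$ through $u$ directly. It is precisely the clamping in \eqref{def_weight} together with the geometric decay of $\beta^t$ that supplies the upper bound, while the lower bound must instead come from the data-driven estimate $u_i^{t+1} \ge 1/C$.
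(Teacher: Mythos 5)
Your proof is correct, and it is in fact more rigorous than what the paper provides: the paper does not give a formal proof of this lemma at all, only the remark that $\lim_{t\to\infty}\beta^t=0$ forces $\lim_{t\to\infty}(w_i^t-w_i^{t+1})=0$ and that $u$ and $w$ are bounded, after which it asserts the lemma follows ``by elementary mathematics.'' Those two facts alone do not imply convergence --- a bounded sequence whose increments vanish can still oscillate --- so the step you isolate is exactly the missing ingredient: the clamping gives $|w_i^{t+1}-w_i^t|\le w_i^t\beta^t$, the product bound $w_i^t\le w_i^1\prod_s(1+\beta^s)=\bar w<\infty$ controls the sequence from above, and then $\sum_t|w_i^{t+1}-w_i^t|\le\bar w\sum_t\beta^t<\infty$ makes each coordinate Cauchy. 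This is the argument the paper's sketch needs, and your derivation of it is sound. Two minor points: the lower bound $u_i^{t+1}\ge 1/C$ is not actually needed for convergence of $w^t$ or $A_t$ (it only guarantees the limit weights are bounded away from zero, which is desirable for the weighted problem to remain well posed); and your induction base ``$w_i^1=1\ge 1/C$'' silently assumes $C\ge 1$, which you should patch by replacing $C$ with $\max\{1,C\}$ or by concluding $w_i^t\ge\min\{1,1/C\}$ instead.
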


We present the overall algorithmic framework in Algorithm \ref{algo_IRLSP}. The algorithm requires data matrix $A$, target dimension $p$, and tolerance parameter $\varepsilon$ as input. After the initialization of weights and the best objective function value $F^{best}$ in Step \ref{algo_IRLSP_line1}, the while loop is executed until $w^t$ and $w^{t+1}$ are close enough. In each iteration of the while loop, $A_t$ is constructed based on $w_t$ and $X_t$ is obtained by SVD of $A_t$ (Steps \ref{algo_IRLSP_line3} and \ref{algo_IRLSP_line4}). If $X_t$ gives a lower objective function value than $X^{best}$, then $X^{best}$ and $F^{best}$ are updated. Recall that $X_t$ is obtained by using $A_t$, but $F(X_t)$ uses the original data matrix $A$. Each iteration ends after the update of weights in Step \ref{algo_IRLSP_line6}. Observe that the termination criteria in Step \ref{algo_IRLSP_line2} solely depends on the convergence of $w_t$. Hence, the algorithm terminates in a finite number of iterations.

\begin{algorithm}[ht]
\caption{wPCA (Weight-based algorithm for $L_1$-PCA)}        
\label{algo_IRLSP}                           
\begin{algorithmic}[1]    
\REQUIRE $A$ (data), $p$ (target dimension), $\varepsilon$ (tolerance), $\beta$ \\
\ENSURE principal components $X^{best} \in \mathbb{R}^{m \times p}$\\
\vspace{0.1cm}
\STATE $t \gets 1$, $w_i^0 \gets 2$, $w_i^1 \gets 1$, $F^{best} \gets \infty$ \label{algo_IRLSP_line1}
\WHILE{$\| w^t - w^{t-1} \|_1 > \varepsilon$} \label{algo_IRLSP_line2}
	\STATE set $A_t$ based on $w^t$ \label{algo_IRLSP_line3}
	\STATE $X_t \gets L2PCA(A_t,p)$ \label{algo_IRLSP_line4}
	\STATE \textbf{if} $F(X_t)$ $<$ $F^{best}$ \textbf{then} $X^{best} \gets X_t$, $F^{best} \gets F(X_t)$ \label{algo_IRLSP_line5}
	\STATE update $w_{t+1}$ by using \eqref{def_weight} given $\beta$ \label{algo_IRLSP_line6}
	\STATE $t \gets t+1$  \label{algo_IRLSP_line7}
\ENDWHILE
\end{algorithmic}
\end{algorithm}

\begin{lemma}
\label{lemma_convergent_B_t}
Eigenvalues of $A_t^{\top} A_t$ are convergent in $t$.
\end{lemma}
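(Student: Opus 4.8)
The plan is to leverage Lemma~\ref{lemma_convergent_weight}, which already establishes that the weighted data matrices $A_t$ converge in $t$. Let $A_\infty$ denote this limit, so that $A_t \to A_\infty$ entrywise as $t \to \infty$. Since matrix multiplication and transposition are continuous operations on a fixed-size matrix, it follows immediately that $A_t^\top A_t \to A_\infty^\top A_\infty$ entrywise. The entire task thus reduces to showing that convergence of the symmetric positive semidefinite matrices $A_t^\top A_t$ implies convergence of their eigenvalues.

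To formalize this, I would first note that each $A_t^\top A_t \in \mathbb{R}^{m \times m}$ is symmetric and positive semidefinite, hence has $m$ real nonnegative eigenvalues, which we order as $\lambda_1(A_t^\top A_t) \ge \cdots \ge \lambda_m(A_t^\top A_t)$. The key tool is Weyl's perturbation inequality for symmetric matrices: for any two symmetric matrices $M$ and $N$ of the same size, the consistently ordered eigenvalues satisfy $|\lambda_k(M) - \lambda_k(N)| \le \|M - N\|_2$ for every $k$, where $\|\cdot\|_2$ denotes the spectral norm. Applying this with $M = A_t^\top A_t$ and $N = A_\infty^\top A_\infty$ gives $|\lambda_k(A_t^\top A_t) - \lambda_k(A_\infty^\top A_\infty)| \le \|A_t^\top A_t - A_\infty^\top A_\infty\|_2$ for each $k$.

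Since the right-hand side tends to $0$ as $t \to \infty$ (entrywise convergence of a fixed-size matrix implies convergence in any matrix norm), each ordered eigenvalue $\lambda_k(A_t^\top A_t)$ converges to $\lambda_k(A_\infty^\top A_\infty)$, which establishes the claim.

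I expect the only real subtlety to be the bookkeeping around ordering and multiplicity of eigenvalues: a priori the eigenvalues of $A_t^\top A_t$ could be permuted, or could merge and split across iterations, so one cannot simply track ``the $k$-th eigenvalue'' without a continuity argument. Weyl's inequality circumvents this entirely, because it bounds the discrepancy between the consistently ordered eigenvalues of the two matrices directly by the norm of their difference, with no assumption of a simple spectrum. An alternative route, should one prefer to avoid citing Weyl, is to invoke continuity of the roots of the characteristic polynomial $\det(\lambda I_m - A_t^\top A_t)$, whose coefficients are polynomial---hence continuous---functions of the entries of $A_t^\top A_t$; continuity of polynomial roots in their coefficients then yields convergence of the eigenvalue multiset. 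Either approach is elementary once Lemma~\ref{lemma_convergent_weight} is in hand.
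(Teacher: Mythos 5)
Your proposal is correct and follows essentially the same route as the paper: convergence of $A_t$ (from Lemma~\ref{lemma_convergent_weight}) gives convergence of $A_t^{\top}A_t$ by continuity, and then continuity of eigenvalues of symmetric matrices finishes the argument. The only difference is that the paper simply cites this last fact (from Lax), whereas you supply its proof explicitly via Weyl's perturbation inequality, which is a reasonable and slightly more self-contained way to close the same step.
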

\begin{proof}
Recall that the weights $w_t$ and weighted matrix $A_t$ are convergent, which also implies $A_t^{\top} A_t$ is also convergent. Since the eigenvalues of symmetric matrices are point-wise convergent if the matrices are convergent \cite{Lax:2007}, it is trivial to see that the eigenvalues of $A_t^{\top} A_t$ are convergent. 
\end{proof}

Hence, Algorithm \ref{algo_IRLSP} gives convergent eigenvalues. Although eigenvalues are convergent, it is not trivial to show the convergence of $X_t$. This is because even a slight change in a matrix can cause a change in an eigenvector, and eigenvalue-eigenvector pairs are not unique.

In order to provide convergent eigenpairs and accelerate the algorithm for large scale data, we use the first order eigenpair approximation formula from \cite{Shmueli:12}. Let $(X_{t-1}^i,\lambda_{t-1}^i)$ be an approximate eigenpair of $A_{t-1}$ and $A_{t} = A_{t-1} + \Delta_t$. Then, the approximate eigenpairs of $A_{t}$ can be obtained by
\begin{eqnarray}
\lambda_{t}^i = \lambda_{t-1}^i + (X_{t-1}^i)^{\top} \Delta_t X_{t-1}^i, \label{formula_ep1}\\
X_{t}^i = X_{t-1}^i + \sum_{j \neq i} \Big( \frac{(X_{t-1}^j)^{\top} \Delta_t X_{t-1}^i}{\lambda_{t-1}^i - \lambda_{t-1}^j} \Big) X_{t-1}^j, \label{formula_ep2}
\end{eqnarray}
using the formula in \cite{Shmueli:12}. The error is of the order of $o(\|\Delta_t \|^2)$. Let \textit{L2PCA\_Approx} be a function that returns principal components by formula \eqref{formula_ep1} - \eqref{formula_ep2}. The modified algorithm is presented in Algorithm \ref{algo_IRLSP_fast}.

\begin{algorithm}[ht]
\caption{awPCA (Approximated weight-based algorithm for $L_1$-PCA)}        
\label{algo_IRLSP_fast}                           
\begin{algorithmic}[1]    
\REQUIRE $A$ (data), $p$ (target dimension), $\varepsilon$ (tolerance), $\beta$, $\gamma$ \\
\ENSURE principal components $X^{best} \in \mathbb{R}^{m \times p}$\\
\vspace{0.1cm}
\STATE $t \gets 1$, $w_i^0 \gets 2$, $w_i^1 \gets 1$, $F^{best} \gets \infty$ 
\WHILE{$\| w^t - w^{t-1} \|_1 > \varepsilon$} 
	\STATE set $A_t$ based on $w^t$, $\Delta_t \gets A_t - A_{t-1}$
	\STATE \textbf{If} $\|\Delta_t\|^2  > \gamma \cdot \|A\|^2$ \textbf{then} $X_t \gets L2PCA(A_t,p)$ \label{algo_IRLSP_fast_line4}
	\STATE \textbf{Else} $(X_t,\lambda_t) \gets L2PCA\_approx(X_{t-1}, \lambda_{t-1}, p)$ \label{algo_IRLSP_fast_line5}
	\STATE \textbf{if} $F(X_t)$ $<$ $F^{best}$ \textbf{then} $X^{best} \gets X_t$, $F^{best} \gets F(X_t)$ 
	\STATE update $w_{t+1}$ by using \eqref{def_weight} given $\beta$
	\STATE $t \gets t+1$  
\ENDWHILE
\end{algorithmic}
\end{algorithm} 

The difference is only in Lines \ref{algo_IRLSP_fast_line4} and \ref{algo_IRLSP_fast_line5}. If the change in $A_t$ is large (greater than $\gamma \cdot \|A\|^2$), we use the original procedure \textit{L2PCA}. If the change in $A_t$ is small (less than or equal to $\gamma \cdot \|A\|^2$), then we use the update formula \eqref{formula_ep1} and \eqref{formula_ep2}. Algorithm \ref{algo_IRLSP_fast} has the following convergence result. 

\begin{proposition}
\label{proposition_convergent_eigenpairs}
Eigenpairs of $A_t^{\top} A_t$ in Algorithm \ref{algo_IRLSP_fast} are convergent in $t$.
\end{proposition}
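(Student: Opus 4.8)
The plan is to combine the convergence of $A_t$ from Lemma~\ref{lemma_convergent_weight} with the specific eigenpair recursion \eqref{formula_ep1}--\eqref{formula_ep2} that Algorithm~\ref{algo_IRLSP_fast} eventually uses. The key quantitative observation I would establish first is that the perturbations are not merely vanishing but \emph{summable}: from the clipping rule \eqref{def_weight} we have $|w_i^{t+1}-w_i^t|\le \beta^t w_i^t$, and since the weights are bounded above and below by positive constants, $|\sqrt{w_i^{t+1}}-\sqrt{w_i^t}|\le C\beta^t$ for some constant $C$. Because $A_t=W_tA$, this yields $\|\Delta_{t+1}\|_F=\|(W_{t+1}-W_t)A\|_F\le C\|A\|_F\,\beta^t$, whence $\sum_t\|\Delta_t\|_F<\infty$ as a dominated geometric series. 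This summability is the engine of the whole argument.

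With that in hand, the first step is to show the recursion is eventually the only branch taken. Since $\|\Delta_t\|\to 0$ while $\gamma\|A\|^2$ is a fixed positive constant, there is a finite $T$ such that $\|\Delta_t\|^2\le\gamma\|A\|^2$ for all $t\ge T$; hence for $t\ge T$ Algorithm~\ref{algo_IRLSP_fast} always executes the \emph{L2PCA\_approx} branch and updates the eigenpairs deterministically through \eqref{formula_ep1}--\eqref{formula_ep2}. From here I would argue convergence of eigenvalues and eigenvectors separately by showing each has summable increments.

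For the eigenvalues, \eqref{formula_ep1} gives $|\lambda_t^i-\lambda_{t-1}^i|=|(X_{t-1}^i)^\top\Delta_t X_{t-1}^i|\le\|X_{t-1}^i\|_2^2\,\|\Delta_t\|$. After checking that the maintained vectors stay uniformly bounded in norm (they begin as unit singular vectors and, by \eqref{formula_ep2}, change by a summable amount, so they remain in a bounded region), this bounds $\sum_{t\ge T}|\lambda_t^i-\lambda_{t-1}^i|$ by a constant times $\sum_t\|\Delta_t\|<\infty$. A real sequence with summable increments is Cauchy, so each $\lambda_t^i$ converges. For the eigenvectors, \eqref{formula_ep2} gives $\|X_t^i-X_{t-1}^i\|\le\sum_{j\ne i}\frac{|(X_{t-1}^j)^\top\Delta_t X_{t-1}^i|}{|\lambda_{t-1}^i-\lambda_{t-1}^j|}\,\|X_{t-1}^j\|\le \frac{C'\|\Delta_t\|}{g}$, where $g=\inf_{t\ge T,\;i\ne j}|\lambda_{t-1}^i-\lambda_{t-1}^j|$ is a uniform spectral gap. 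Summability of $\|\Delta_t\|$ again makes $\{X_t^i\}$ Cauchy, hence convergent, and together with the eigenvalue limits this gives convergent eigenpairs.

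The hard part will be justifying the uniform gap $g>0$ that sits in the denominator of \eqref{formula_ep2}; this is exactly the obstacle the excerpt flags when it notes that eigenvectors need not converge under slight matrix changes. By Lemma~\ref{lemma_convergent_B_t} the eigenvalues $\lambda_t^i$ converge, so if the limiting values are distinct then the gaps are bounded away from $0$ for large $t$ and, being positive at every finite step, admit a common lower bound $g>0$; I would therefore state simplicity of the limiting spectrum as the standing hypothesis (or restrict the claim to the generic distinct-eigenvalue case). A repeated limiting eigenvalue would let a denominator vanish and break the bound, so that case would have to be handled by tracking the invariant subspace rather than individual vectors. A minor secondary point is the $o(\|\Delta_t\|^2)$ truncation error in \eqref{formula_ep1}--\eqref{formula_ep2}: since it enters only through the bounded quantities $X_{t-1}^i,\lambda_{t-1}^i$, it contributes a further summable term and does not disturb the Cauchy conclusion.
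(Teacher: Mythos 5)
Your proposal is correct and follows the same skeleton as the paper's proof (eventually only the \emph{L2PCA\_approx} branch fires, then the recursion \eqref{formula_ep1}--\eqref{formula_ep2} with vanishing $\Delta_t$ forces convergence), but it is substantially more careful at the two points where the paper's argument is thinnest. First, the paper concludes convergence from ``all terms are bounded and $\Delta_t \to 0$,'' i.e.\ from vanishing increments alone, which is not by itself sufficient for a sequence to converge. Your observation that the clipping rule \eqref{def_weight} gives $|w_i^{t+1}-w_i^t|\le \beta^t w_i^t$, hence $\|\Delta_{t+1}\|_F \le C\|A\|_F\,\beta^t$ and therefore \emph{summable} perturbations, is exactly the missing quantitative engine: summable increments make $\lambda_t^i$ and $X_t^i$ Cauchy, which is what the conclusion actually requires. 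Second, you make explicit the uniform spectral gap $g>0$ needed to control the denominators $\lambda_{t-1}^i-\lambda_{t-1}^j$ in \eqref{formula_ep2}; the paper hides this inside the phrase ``all of the terms in the formula are bounded,'' which silently assumes distinct limiting eigenvalues. Your proposal honestly flags that a repeated limiting eigenvalue breaks the vector-wise bound and would require tracking invariant subspaces instead, and that simplicity of the limiting spectrum should be stated as a hypothesis. In short, what your route buys is a proof that actually closes; what the paper's shorter route buys is brevity at the cost of two unstated assumptions (summability is implicitly available from \eqref{def_weight} but never invoked, and the spectral gap is never justified). One minor point: your bound on the $o(\|\Delta_t\|^2)$ truncation error is fine for showing the \emph{computed} eigenpairs converge, but if one wants them to converge to \emph{exact} eigenpairs of the limit matrix one would additionally need to argue that the accumulated truncation error stays controlled, which again your summability estimate supports but the paper does not address.
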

\begin{proof}

Note that the convergence of $\Delta_t$ in Lemma \ref{lemma_convergent_B_t} does not depend on how the eigenvectors are obtained and thus it holds whether we execute Lines \ref{algo_IRLSP_fast_line4} or \ref{algo_IRLSP_fast_line5} in each iteration. Hence, we have $\displaystyle \lim_{t \rightarrow \infty} \Delta_t = 0$ in Algorithm \ref{algo_IRLSP_fast}.

Since $\Delta_t$ converges to zero, after a certain number of iterations $\bar{t}$, we have $\|\Delta_t\|^2  < \gamma \cdot \|A\|^2$ for all $t > \bar{t}$. From such large $\bar{t}$, the approximation rule applies. In \cite{Shmueli:12}, it is shown that 
\begin{center}
$\bar{\lambda}_{t}^i = \bar{\lambda}_{t-1}^i + (\bar{X}_{t-1}^i)^{\top} \Delta_t \bar{X}_{t-1}^i  + o(\|\Delta_t \|^2),$\\
$\bar{X}_{t}^i = \bar{X}_{t-1}^i + \sum_{j \neq i} \Big( \frac{(\bar{X}_{t-1}^j)^{\top} \Delta_t \bar{X}_{t-1}^i}{\bar{\lambda}_{t-1}^i - \bar{\lambda}_{t-1}^j} \Big) \bar{X}_{t-1}^j  + o(\|\Delta_t \|^2) ,$ 
\end{center}
when $(\bar{X}_{t-1},\bar{\lambda}_{t-1})$ are the exact eigenpairs. Since all of the terms in the formula are bounded and $\displaystyle \lim_{t \rightarrow \infty} \Delta_t = 0$, we conclude that the eigenpairs of $A_t^{\top} A_t$ are convergent.
\end{proof}

\section{Computational Experiment}
\label{section_computational_experiment}

We compare the performance of the proposed and benchmark algorithms for varying instance sizes $(m,n)$ and number of PCs $(p)$. All experiments were performed on a personal computer with 8 GB RAM and Intel Core i7 (2.40GHz dual core). We implement Algorithms \ref{algo_IRLSP} and \ref{algo_IRLSP_fast} in R \cite{Rstat}, which we denote as \textsf{wPCA} and \textsf{awPCA}, respectively. The R script for \textsf{wPCA} and \textsf{awPCA} is available on a web site \footnote[1]{\url{http://dynresmanagement.com/uploads/3/3/2/9/3329212/wl1pca.zip}}. For the \textsf{awPCA} implementation, we use condition $\| w^t - w^{t-1} \|_1 > \gamma \cdot \| w^t \|_1$ instead of $\|\Delta_t\|^2  > \gamma \cdot \|A\|^2$, to avoid unnecessary calculation of $\|\Delta_t\|^2$ in Line \ref{algo_IRLSP_line4} of Algorithm \ref{algo_IRLSP_fast}. The weight-based condition is similar to the original condition as the difference in the weight captures $\Delta_t$. For the experiment, we use parameters $\varepsilon = 0.001$ and $\beta = 0.99$ for \textsf{wPCA} and \textsf{awPCA} and $\gamma = 0.1$ for \textsf{awPCA}, where the parameters are tuned based on pilot runs to balance the solution quality and execution time. We also set up maximum number of iterations to 200. We compare our algorithms with the algorithms in \cite{Brooks-etal:12, Ke-Kanade:05, Kwak:08, Nie-etal:11}. The work in \cite{Brooks-etal:12b} provides R implementations of the algorithms in \cite{Brooks-etal:12, Ke-Kanade:05, Kwak:08}, which we denote as \textsf{Brooks}, \textsf{Ke}, \textsf{Kwak}, respectively. We implement the algorithm in \cite{Nie-etal:11} in R, which we denote as \textsf{Nie}.

Although algorithms \textsf{Ke} and \textsf{Brooks} are for \eqref{definition_L1PCA_minError} and \textsf{Kwak} and \textsf{Nie} are for the $L_1$ norm version of \textsf{P2}, we evaluate the objective function value for all benchmark algorithms and compare them against our algorithms. Especially, \textsf{Kwak} and \textsf{Nie}, which solve different $L_1$-PCA problem, are included because
\begin{itemize}
\item  we observed that \textsf{Kwak} and \textsf{Nie} are better than \textsf{Ke} and \textsf{Brooks} for \eqref{definition_L1PCA_minError} for some instances, and
\item we found that \textsf{Kwak} and \textsf{Nie} are more scalable and solve larger data sets in a reasonable time in the experiment.
\end{itemize}
Therefore, we include \textsf{Kwak} and \textsf{Nie} for the comparison for solving \textsf{P1}.

It is worth to note that \textsf{Ke} and \textsf{Brooks} try to find the best fitting subspace, where definition of $E$ in \eqref{definition_L1PCA_minError} is replaced by $E = A-UX^{\top}$ and $U \in R^{n \times p}$. The optimal solutions of the two formulations may be different despite both minimizing the $L_1$ distance from the original data.

Let $F_{algo}$ represent the objective function value obtained by $algo \in \{$\textsf{Ke}, \textsf{Brooks}, \textsf{Kwak}, \textsf{Nie}, \textsf{wPCA}, \textsf{awPCA}$\}$, with respect to \eqref{definition_L1PCA_minError}. For the comparison purposes for \textsf{awPCA}, we use the gap from the best objective function value defined as
\begin{center}
\begin{tabular}{l}
$\Delta_{algo} = \min \Big\{ \frac{F_{algo}}{\min\{F_{\mbox{\scriptsize{\textsf{awPCA}}}}, F_{\mbox{\scriptsize{\textsf{Ke}}}}, F_{\mbox{\scriptsize{\textsf{Brooks}}}}, F_{\mbox{\scriptsize{\textsf{Kwak}}}}, F_{\mbox{\scriptsize{\textsf{Nie}}}}\} } - 1, 1 \Big\}$, 
\end{tabular}
\end{center}
\noindent for each $algo \in \{$\textsf{awPCA}, \textsf{Ke}, \textsf{Brooks}, \textsf{Kwak}, \textsf{Nie}$\}$. Similarly, for \textsf{wPCA}, we define
\begin{center}
\begin{tabular}{l}
$\Delta_{algo} = \min \Big\{  \frac{F_{algo}}{\min\{F_{\mbox{\scriptsize{\textsf{wPCA}}}}, F_{\mbox{\scriptsize{\textsf{Ke}}}}, F_{\mbox{\scriptsize{\textsf{Brooks}}}}, F_{\mbox{\scriptsize{\textsf{Kwak}}}}, F_{\mbox{\scriptsize{\textsf{Nie}}}}\} } -1, 1 \Big\}$, 
\end{tabular}
\end{center}
\noindent for each $algo \in \{$\textsf{wPCA}, \textsf{Ke}, \textsf{Brooks}, \textsf{Kwak}, \textsf{Nie}$\}$. Note that $\Delta_{algo}$ represents the gap between \textit{algo} and the best of all algorithms. Note also that we set up an upper bound of 1 for $\Delta_{algo}$. Hence, if the gap is larger than 1 (or 100\%), then $\Delta_{algo}$ is assigned value of 1 (or 100\%).

For all of the instances used in the experiment, we first standardize each column and deal exclusively with the standardized data. Hence, the reconstruction errors are also calculated based on the standardized data.

In the computational experiment, we observed that $\Delta_{\mbox{\scriptsize{\textsf{awPCA}}}}$ and $\Delta_{\mbox{\scriptsize{\textsf{wPCA}}}}$ are very similar while the execution time of \textsf{awPCA} is much faster. Hence, in this section, we first focus on presenting the performance of \textsf{awPCA} against the benchmark algorithms and after on comparing the difference between \textsf{wPCA} and \textsf{awPCA}.

The rest of the section is organized as follows. In Section \ref{section_exp_instance}, we present synthetic instance generation procedures and explain the instances from the UCI Machine Learning Repository \cite{Bache+Lichman:2013}. In Sections \ref{section_exp_synthetic} and \ref{section_exp_uci}, we present the performance of \textsf{awPCA} for the synthetic and UCI instances, respectively. In Section \ref{section_exp_compare_two}, we compare the performance of \textsf{wPCA} and \textsf{awPCA} for the UCI instances.

\subsection{Instances}
\label{section_exp_instance}

\subsubsection{Synthetic Instances}

In order to provide a systematic analysis, we generate synthetic instances with presence of outliers and various $(m,n,r)$, where $m \in \{20,50\}$, $n \in \{100,300\}$, and $r \in \{0,0.1,0.2,0.3\}$. For each $(m,n,r)$, we generate 5 distinguished instances. Hence, we have a total of 80 generated instances. The synthetic instances used in the experiment are available on a web site \footnote[2]{\url{http://dynresmanagement.com/uploads/3/3/2/9/3329212/pca_instance_park_klabjan.zip}}. The detailed algorithm is presented at the end of this section. In the generation procedure, $r \%$ of observations are generated to have a higher variance than the remaining normal observations. The instance generation algorithm needs additional parameter $q$ (target rank), which we fix to 10 for the instances we generated in this experiment. Hence, the instances we use in the experiment are likely to have rank equal to 10. We consider different $p$ values, where $p \in \{8,9,10,11,12\}$. Given that $q=10$, we select $p$ values around 10.

The purpose of the instance generation algorithm is to generate instances with some of the observations as outliers, so that $L_1$-PCA solutions are more likely to be away from $L_2$-PCA solutions. In order to generate instances, we use the procedure described in \cite{Ke-Kanade:05} with a slight modification. The instances used in the experiment in \cite{Ke-Kanade:05} have fixed parameters and constant valued outliers to simulate data loss. To check the performance of the algorithms over various parameters and different (non-constant) patterns of outliers, we generate our own instances. In the generation procedure of \cite{Ke-Kanade:05}, a matrix with a small fixed rank is generated and then extremely large constant values randomly replace the original data matrix. In their instances, outliers have the same value, which can be interpreted as data loss, but they do not consider outliers due to incorrect measurements or cases with only a few observations with outliers. Our procedure addresses all these cases.

We present the procedure in Algorithm \ref{algo_pca_instance_generation}.

\begin{algorithm}[ht]
\caption{PCA instance generation}        
\label{algo_pca_instance_generation}                           
\begin{algorithmic}[1]    
\REQUIRE $m$, $n$, $q$ (target rank), $r$ (\% outliers) \\
\ENSURE $A \in \mathbb{R}^{n \times m}$\\
\vspace{0.1cm}
\STATE Generate random matrix $P = [p_{ij}]\in \mathbb{R}^{n \times m}$ with $p_{ij} \sim U(-100,100)$ \label{algo_pca_instance_generation_line_1}
\STATE \textbf{for} each row  \label{algo_pca_instance_generation_line_2}
\STATE \quad  Generate random number $u_1 \sim U(0,1)$  \label{algo_pca_instance_generation_line_3}
\STATE \quad  \textbf{if} $u_1<r$ \label{algo_pca_instance_generation_line_4}
\STATE \quad \quad   \textbf{for} each column $j \leq q$ \label{algo_pca_instance_generation_line_5}
\STATE \quad \quad \quad    Generate $u_2 \sim U(0,1)$ \label{algo_pca_instance_generation_line_6}
\STATE \quad \quad \quad    \textbf{if} $u_2<0.1$, \textbf{then} $h_{ij} \sim N(0,30)$ \label{algo_pca_instance_generation_line_7}
\STATE \quad \quad \quad    \textbf{else} $h_{ij} \sim N(0,1)$ \label{algo_pca_instance_generation_line_8}
\STATE \quad \quad   \textbf{end for} \label{algo_pca_instance_generation_line_9}
\STATE \quad  \textbf{else} $h_{ij} \sim N(0,1)$ \label{algo_pca_instance_generation_line_10}
\STATE \textbf{end for} \label{algo_pca_instance_generation_line_11}
\STATE Obtain $P = U \Sigma V^{\top}$, the SVD of $P$
 \label{algo_pca_instance_generation_line_12}
\STATE Construct $A = (U[,1:q] + H) \Sigma[1:q,1:q] V^{\top}[,1:q]$ with $H = [h_{ij}] \in \mathbb{R}^{n \times q}$ generated in Steps \ref{algo_pca_instance_generation_line_2} - \ref{algo_pca_instance_generation_line_10} \label{algo_pca_instance_generation_line_13}
\STATE Adjust $A$ to have 0 mean for each column \label{algo_pca_instance_generation_line_14}
\end{algorithmic}
\end{algorithm}

 In Step \ref{algo_pca_instance_generation_line_1}, we first generate random matrix $P$, where each $p_{ij}$ is from the uniform distribution between -100 and 100.
Next in Steps \ref{algo_pca_instance_generation_line_2} - \ref{algo_pca_instance_generation_line_10}, we generate random perturbation matrix $H$ with approximately $r$ percent of rows having extremely large perturbations, where each row has approximately 10\% extreme value entries. After SVD of $P U \Sigma V^{\top}$ in Step \ref{algo_pca_instance_generation_line_12}, data matrix $A$ is generated, where $U[,1:q]$ is the submatrix of $U$ with the first $q$ columns, $\Sigma[1:q,1:q]$ is the submatrix of $\Sigma$ with the first $q$ columns and $q$ rows, and $V^{\top}[,1:q]$ is the submatrix of $V^{\top}$ with the first $q$ columns. The final data matrix $A$ is generated in Step \ref{algo_pca_instance_generation_line_14} after adjusting it to have 0 column means.

\subsubsection{UCI instances}

We also consider classification datasets from the UCI Machine Learning Repository \cite{Bache+Lichman:2013} and adjust them to create PCA instances. Based on the assumption that observations in the same class of a classification data set have similar attribute distributions, we consider each class of the classification datasets. For each dataset, we partition the observations based on labels (classes). When there exist many labels, we select the top two labels with respect to the number of observations among all labels. For each partitioned data, labels and attributes with zero standard deviation (hence, meaningless) are removed and the matrix is standardized to have zero mean and unit standard deviation for each attribute. 

In Table \ref{tab:uca_data}, we list the PCA instances we used and the corresponding original dataset from \cite{Bache+Lichman:2013}. In the first column, abbreviate names of the original data sets are presented. The full names of the data sets are Breast Cancer Wisconsin, Indian Liver Patient Dataset, Cardiotocography, Ionosphere, Connectionist Bench (Sonar), Landsat Satellite, Spambase, Magic Gamma Telescope, Page Blocks Classification, and Pen-Based Recognition of Handwritten Digits. Each PCA instance is classified as small or large based on $m$ and $n$. If $mn \leq 15,000$, the instance is classified as small, otherwise, the instance is classified as large. In the last column in Table \ref{tab:uca_data}, the small and large instances are indicated by $S$ and $L$, respectively. For the large instances, only \textsf{Kwak} and \textsf{Nie} are compared with the proposed algorithms, due to scalability issues of the other benchmark algorithms.

\begin{table}[htbp]
      \caption{PCA instances created based on the datasets from the UCI Machine Learning Repository \cite{Bache+Lichman:2013}}
  \label{tab:uca_data}%
\footnotesize
  \centering
    \begin{tabular}{|c|c|c||c|c|c|}
    \hline
    \multicolumn{3}{|c||}{Original dataset from UCI}   & \multicolumn{3}{c|}{PCA instance}  \\ \hline
    Name & $(m,n)$ & Num labels & Name  & $(m,n)$ & size \\ \hline
    cancer & (9,699) & 2     & cancer\_2 & (9,444) & S  \\
          &       &       & cancer\_4 & (9,239) & S  \\  \hline
    ilpd & (10,583) & 2     & ilpd\_1 & (10,416)& S  \\
          &       &       & ilpd\_2 & (10,167)& S \\  \hline
    cardio & (21,2126) & 10     & cardio\_1 & (19,384)& S \\
          &       &       & cardio\_2 & (19,579)& S  \\  \hline
    iono & (34,351) & 2     & iono\_b & (33,126)& S  \\
          &       &       & iono\_g & (32,225) & S \\  \hline
    sonar & (60,208) & 1     & sonar\_g & (60,111)& S  \\
          &       &       & sonar\_r & (60,97)& S \\  \hline
    landsat & (36,4435) & 7     & landsat\_1 & (36,1072) & L  \\
          &       &       & landsat\_3 & (36,961) & L  \\  \hline  
    spam & (57,4601) & 2     & spam\_0 & (57,2788)& L  \\
          &       &       & spam\_1 & (57,1813) & L \\  \hline
    magic & (10,19020) & 2     & magic\_g & (10,12332)& L \\
          &       &       & magic\_h & (10,6688)& L  \\  \hline
    blocks & (10,5473) & 5     & blocks\_1 & (10,4913)& L  \\  \hline
    hand  & (16,10992) & 10     & hand\_0 & (16,1142)& L  \\
  &       &       & hand\_1 & (16,1143)& L \\  \hline
    \end{tabular}%

\end{table}%

\subsection{Performance of awPCA for Synthetic Instances}
\label{section_exp_synthetic}

In Table \ref{tab:awpca_syn}, we present the result for \textsf{awPCA} for the synthetic instances. Although we created synthetic instances with varying $r$ (\% of outliers) values, we observed that the performances of the algorithms are very similar over different $r$ values for each $(m,n,p)$ triplet. Hence, in Table \ref{tab:awpca_syn}, we present the average value over all $r$. That is, each row of the table is the average of 20 instances for the corresponding $(m,n)$ pair given $p$. The first two columns are the instance size and number of PCs, the next five columns are $\Delta_{algo}$ for all algorithms, and the last five columns are the execution times in seconds. For each row, the lowest $\Delta_{algo}$ value among the five algorithms is boldfaced.

\begin{table}[htbp]
\setlength{\tabcolsep}{1pt}
\scriptsize
  \centering
  \caption{Performance of \textsf{awPCA} for synthetic instances} \label{tab:awpca_syn}
    \begin{tabular}{|rr|rrrrr|rrrrr|}
    \hline
    \multicolumn{2}{|c|}{Instance} & \multicolumn{5}{c|}{Gap from the best ($\Delta_{algo}$)} & \multicolumn{5}{c|}{Time (seconds)} \\ \hline
    ($m,n$) & $p$     & \textsf{awPCA} & \textsf{Ke}    & \textsf{Brooks} & \textsf{Kwak} & \textsf{Nie} & \textsf{awPCA} & \textsf{Ke}    & \textsf{Brooks} & \textsf{Kwak} & \textsf{Nie} \\ \hline
    (20, 100) & 8     & \textbf{1\%} & 6\%   & 2\%   & 12\%  & 18\%  & 0.0   & 0.7   & 1.3   & 0.0   & 0.0 \\
          & 9     & 4\%   & 22\%  & \textbf{3\%} & 16\%  & 26\%  & 0.0   & 0.5   & 1.3   & 0.0   & 0.0 \\
          & 10    & \textbf{0\%} & \textbf{0\%} & 1\%   & 7\%   & 7\%   & 0.0   & 0.4   & 1.3   & 0.0   & 0.0 \\
          & 11    & \textbf{0\%} & 69\%  & 2\%   & 7\%   & 12\%  & 0.0   & 0.4   & 1.3   & 0.0   & 0.0 \\
          & 12    & \textbf{0\%} & 70\%  & 2\%   & 7\%   & 16\%  & 0.0   & 0.4   & 1.2   & 0.0   & 0.0 \\ \hline
    (20, 300) & 8     & \textbf{2\%} & 8\%   & 3\%   & 10\%  & 12\%  & 0.0   & 4.9   & 9.5   & 0.0   & 0.1 \\
          & 9     & \textbf{3\%} & 10\%  & \textbf{3\%} & 13\%  & 19\%  & 0.0   & 2.8   & 9.6   & 0.0   & 0.1 \\
          & 10    & \textbf{0\%} & \textbf{0\%} & 1\%   & 3\%   & 3\%   & 0.0   & 1.2   & 9.6   & 0.0   & 0.1 \\
          & 11    & \textbf{0\%} & 9\%   & 1\%   & 3\%   & 6\%   & 0.0   & 1.2   & 9.6   & 0.0   & 0.1 \\
          & 12    & \textbf{0\%} & 7\%   & 1\%   & 3\%   & 8\%   & 0.0   & 1.2   & 9.6   & 0.0   & 0.1 \\ \hline
    (50, 100) & 8     & \textbf{1\%} & 3\%   & 2\%   & 13\%  & 16\%  & 0.0   & 3.8   & 19.2  & 0.0   & 0.0 \\
          & 9     & \textbf{1\%} & 7\%   & 3\%   & 15\%  & 21\%  & 0.0   & 2.9   & 19.1  & 0.0   & 0.0 \\
          & 10    & \textbf{0\%} & \textbf{0\%} & 3\%   & 7\%   & 7\%   & 0.0   & 2.9   & 19.3  & 0.0   & 0.0 \\
          & 11    & \textbf{0\%} & 100\% & 2\%   & 7\%   & 8\%   & 0.0   & 4.3   & 19.2  & 0.0   & 0.0 \\
          & 12    & \textbf{0\%} & 100\% & 3\%   & 7\%   & 10\%  & 0.0   & 4.8   & 19.2  & 0.0   & 0.0 \\ \hline
    (50, 300) & 8     & \textbf{1\%} & 3\%   & 2\%   & 9\%   & 13\%  & 0.1   & 27.0  & 227.4 & 0.0   & 0.1 \\
          & 9     & \textbf{2\%} & 5\%   & 3\%   & 10\%  & 16\%  & 0.1   & 22.0  & 226.5 & 0.0   & 0.2 \\
          & 10    & \textbf{0\%} & \textbf{0\%} & 1\%   & 3\%   & 3\%   & 0.0   & 18.7  & 226.7 & 0.0   & 0.2 \\
          & 11    & \textbf{0\%} & 86\%  & 1\%   & 3\%   & 4\%   & 0.0   & 23.1  & 228.1 & 0.0   & 0.2 \\
          & 12    & \textbf{0\%} & 98\%  & 1\%   & 3\%   & 5\%   & 0.0   & 27.2  & 226.0 & 0.0   & 0.2 \\ \hline
    \end{tabular}%
  \label{tab:addlabel}%
\end{table}%

Note that $\Delta_{\mbox{\scriptsize{\textsf{awPCA}}}}$ values are near zero for all instances. Further, $\Delta_{\mbox{\scriptsize{\textsf{awPCA}}}}$ has the lowest gaps (boldfaced numbers) for all instances among all algorithms except for one instance class. \textsf{Brooks} constantly gives the second best gaps while \textsf{Ke} gives 0\% gaps for $p = 10$, third best gaps for $p < 10$ and worst gaps for $p > 10$. \textsf{Nie} and \textsf{Kwak} generally give the similar result as they are designed to solve the same problem.

The execution times of the algorithms can be grouped into two groups: \textsf{awPCA}, \textsf{Kwak}, and \textsf{Nie} are in the faster group and \textsf{Ke} and \textsf{Brooks} are in the slower group. \textsf{Ke} and \textsf{Brooks} spend much more time on larger instances compared to the other three algorithms. Although it is not easy to compare, \textsf{Kwak} is the fastest among all algorithms, yet $\Delta_{\mbox{\scriptsize{\textsf{Kwak}}}}$ is not as low as  $\Delta_{\mbox{\scriptsize{\textsf{awPCA}}}}$. It is important to note that the difference in the execution time between \textsf{Kwak} and \textsf{awPCA} is negligible and \textsf{awPCA} is fastest among the algorithms designed to solve \textsf{P1} with the $L_1$ norm (i.e., \textsf{Ke}, \textsf{Brooks}, and \textsf{awPCA}).

\subsection{Performance of awPCA for UCI Instances}
\label{section_exp_uci}

\begin{figure*}
\centering
\includegraphics[width=1\textwidth]{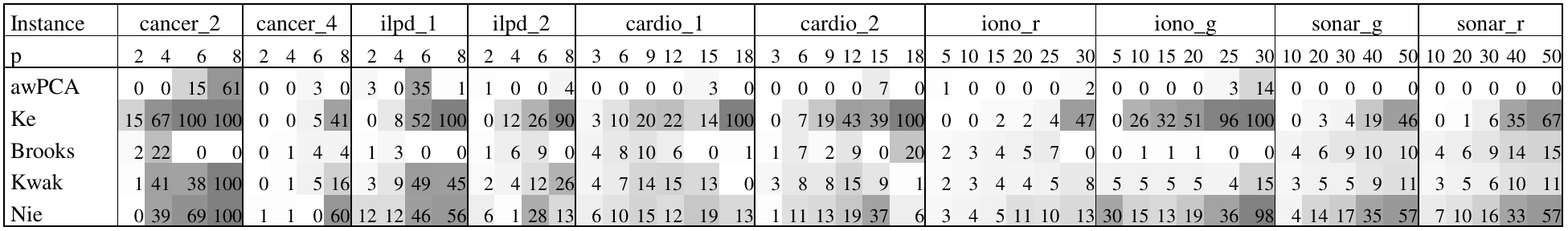}
\caption{Heat map of the gap (\%) from the best for small UCI instances}
\label{fig:wpca_small_uci_gap}
\end{figure*}

\begin{figure*}
\centering
\includegraphics[width=1\textwidth]{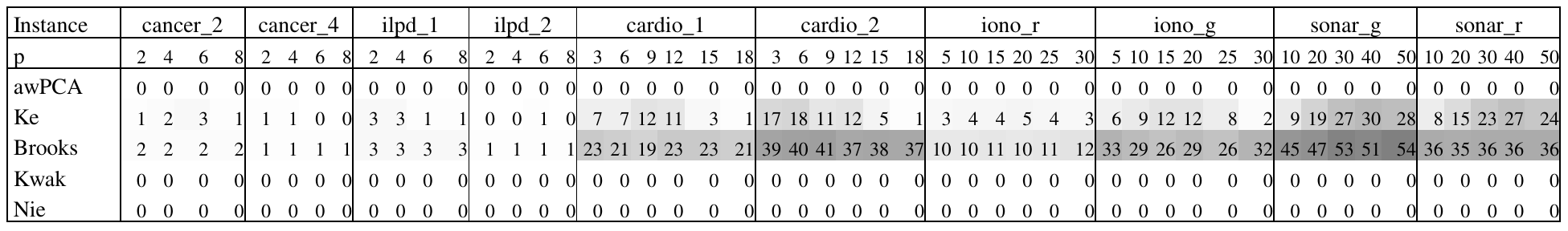}
\caption{Heat map of the execution times (seconds) for small UCI instances}
\label{fig:wpca_small_uci_time}
\end{figure*}

\begin{figure*}
\centering
\includegraphics[width=1\textwidth]{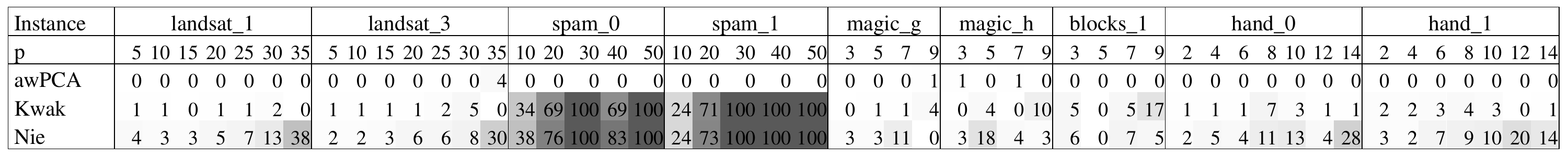}
\caption{Heat map of the gap (\%) from the best for large UCI instances}
\label{fig:wpca_large_uci_gap}
\end{figure*}

\begin{figure*}
\centering
\includegraphics[width=1\textwidth]{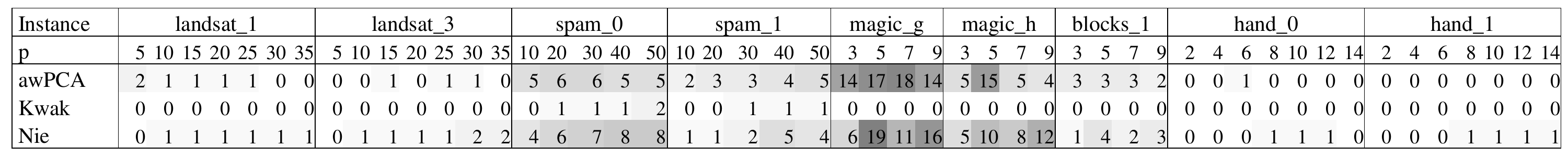}
\caption{Heat map of the execution times (seconds) for large UCI instances}
\label{fig:wpca_large_uci_time}
\end{figure*}

For each PCA instance in Table \ref{tab:uca_data}, we execute the algorithms with various $p$ values. The number of PCs $p$ covers the entire spectrum $0,\cdots,m$ in increments of 2,3,5, or 10 depending on $m$: \textit{cancer} and \textit{ilpd} with $p \in \{2,4,6,8\}$, \textit{cardio} with $p \in \{3,6,9,12,15\}$, \textit{iono} with $p \in \{5,10,15,20,25,30\}$, \textit{sonar} and \textit{spam} with $p \in \{10,20,30,40,50\}$, \textit{landsat} with $p \in \{5,10,\cdots,30,35\}$, \textit{magic} and \textit{block} with $p \in \{1,3,5,7,9\}$, and \textit{hand} with $p \in \{2,4,\cdots,12,14\}$.

For the small UCI instances, we present heat maps of $\Delta_{algo}$ and the execution times of all algorithms in Figures \ref{fig:wpca_small_uci_gap} and \ref{fig:wpca_small_uci_time}. In both figures, the numbers are $\Delta_{algo}$ in percentage or execution time in seconds, a white cell implies near-zero $\Delta_{algo}$ or near-zero execution time, and a dark gray cell implies the opposite. In Figure \ref{fig:wpca_small_uci_gap}, \textsf{awPCA} is consistently best with \textsf{Brooks} usually being the second best algorithm. The value of $\Delta_{\mbox{\scriptsize{\textsf{awPCA}}}}$ is zero except for a few cases. For such cases with $\Delta_{\mbox{\scriptsize{\textsf{awPCA}}}} > 0$, \textsf{Brooks} performs the best. The values of $\Delta_{\mbox{\scriptsize{\textsf{Ke}}}}$, $\Delta_{\mbox{\scriptsize{\textsf{Kwak}}}}$, and $\Delta_{\mbox{\scriptsize{\textsf{Nie}}}}$ tend to increase in $p$. In Figure \ref{fig:wpca_small_uci_time}, we observe the same trend from Section \ref{section_exp_synthetic}: \textsf{awPCA}, \textsf{Kwak}, and \textsf{Nie} are in the faster group and \textsf{Ke} and \textsf{Brooks} become slower as instance size increases.

For the large UCI instances, we only compare \textsf{awPCA} against \textsf{Kwak} and \textsf{Nie}, due to scalability issues of \textsf{Ke} and \textsf{Brooks}. Hence, $\Delta_{algo}$ here is the gap from the best of \textsf{awPCA}, \textsf{Kwak} and \textsf{Nie}. In Figures \ref{fig:wpca_large_uci_gap} and \ref{fig:wpca_large_uci_time}, we present heat maps of $\Delta_{algo}$ and the execution times of the three algorithms. Similar to Figures \ref{fig:wpca_small_uci_gap} and \ref{fig:wpca_small_uci_time}, a white cell implies a low value. In Figure \ref{fig:wpca_large_uci_gap}, \textsf{awPCA} is consistently best except for four cases and even for the four cases $\Delta_{\mbox{\scriptsize{\textsf{awPCA}}}}$ are very small. We observe that $\Delta_{\mbox{\scriptsize{\textsf{Kwak}}}}$ and $\Delta_{\mbox{\scriptsize{\textsf{Nie}}}}$ tend to increase in $p$, where $\Delta_{\mbox{\scriptsize{\textsf{Kwak}}}}$ is slightly smaller than $\Delta_{\mbox{\scriptsize{\textsf{Nie}}}}$ in general. In Figure \ref{fig:wpca_large_uci_time}, the execution time of \textsf{Kwak} is the fastest, and \textsf{awPCA} and \textsf{Nie} are of the same magnitude, although \textsf{awPCA} is slightly faster than \textsf{Nie}.

Based on the results for the UCI instances, we conclude that \textsf{awPCA} performs the best while the execution time of \textsf{awPCA} is of the same order or lower than the remaining algorithm.

\subsection{Comparison of wPCA and awPCA for UCI Instances}
\label{section_exp_compare_two}

In this section, we compare \textsf{wPCA} and \textsf{awPCA} for the UCI instances in terms of solution quality ($\Delta_{\mbox{\scriptsize{\textsf{wPCA}}}}$ and $\Delta_{\mbox{\scriptsize{\textsf{awPCA}}}}$) and execution time. In Table \ref{tab:compare}, we present the average performance of \textsf{wPCA} and \textsf{awPCA} for all $p$ values considered in Section \ref{section_exp_uci}. The fourth column is defined as \textit{diff} = $\Delta_{\mbox{\scriptsize{\textsf{wPCA}}}}-\Delta_{\mbox{\scriptsize{\textsf{awPCA}}}}$, where a negative \textit{diff} value implies that \textsf{wPCA} gives a better solution and near-zero \textit{diff} value implies that $\Delta_{\mbox{\scriptsize{\textsf{wPCA}}}}$ and $\Delta_{\mbox{\scriptsize{\textsf{awPCA}}}}$ are similar. The seventh column is defined as $ratio =$ execution time of \textsf{awPCA} / execution time of \textsf{wPCA}, where a less-than-one \textit{ratio} value implies that \textsf{awPCA} is faster than \textsf{wPCA}. In Table \ref{tab:compare}, we observe that $\Delta_{\mbox{\scriptsize{\textsf{wPCA}}}}$ and $\Delta_{\mbox{\scriptsize{\textsf{awPCA}}}}$ are very similar except for two instances (boldfaced values), while \textsf{awPCA} spends only 20\% of the time of \textsf{wPCA} on average. Note also that \textsf{awPCA} is not always inferior to \textsf{wPCA}. Although it is rare, \textsf{awPCA} gives a better solution than \textsf{wPCA} for instances $magic\_h$ and $cardio\_1$. In general, we found that $\Delta_{\mbox{\scriptsize{\textsf{awPCA}}}}$ is very similar or slightly larger than $\Delta_{\mbox{\scriptsize{\textsf{wPCA}}}}$, while \textsf{awPCA} is much faster. On the other hand, we can also ignore the time difference if execution times are within a few seconds. The UCI instances $spam\_0, spam\_1, magic\_g$, and $magic\_h$ with clear time difference between \textsf{wPCA} and \textsf{awPCA} have $mn > 50000$. Therefore, we recommend to use \textsf{wPCA} when the data size small and \textsf{awPCA} when the data size is very large.

\begin{table}[htbp]
\setlength{\tabcolsep}{4pt}
  \centering
  \small
  \caption{Comparison of wPCA and awPCA for UCI instances}
    \begin{tabular}{|l|rrr|rrr|}
    \hline
          & \multicolumn{3}{|c|}{Gap from the best ($\Delta_{algo}$)} & \multicolumn{3}{c|}{Time (seconds)} \\ \hline
    Instance & wPCA  & awPCA & diff  & wPCA  & awPCA & ratio \\ \hline
    cancer\_2 & 15.7\% & 19.0\% & \textbf{-3.3\%} & 1.7   & 0.1   & 0.1 \\
    cancer\_4 & 0.8\% & 0.8\% & 0.0\% & 0.4   & 0.0   & 0.1 \\
    ilpd\_1 & 1.0\% & 9.8\% & \textbf{-8.7\%} & 0.3   & 0.1   & 0.3 \\
    ilpd\_2 & 0.9\% & 1.2\% & -0.3\% & 0.5   & 0.0   & 0.0 \\
    cardio\_1 & 0.7\% & 0.6\% & 0.1\% & 0.3   & 0.1   & 0.3 \\
    cardio\_2 & 1.0\% & 1.2\% & -0.2\% & 1.1   & 0.1   & 0.1 \\
    iono\_r & 0.4\% & 0.4\% & 0.0\% & 0.4   & 0.0   & 0.1 \\
    iono\_g & 2.8\% & 2.8\% & 0.0\% & 0.2   & 0.0   & 0.2 \\
    sonar\_g & 0.1\% & 0.1\% & 0.0\% & 1.0   & 0.1   & 0.1 \\
    sonar\_r & 0.0\% & 0.0\% & 0.0\% & 0.9   & 0.1   & 0.1 \\ \hline
    landsat\_1 & 0.0\% & 0.0\% & 0.0\% & 3.2   & 0.6   & 0.2 \\
    landsat\_3 & 0.6\% & 0.6\% & 0.0\% & 5.7   & 0.4   & 0.1 \\
    spam\_0 & 0.0\% & 0.0\% & 0.0\% & 16.7  & 5.2   & 0.3 \\
    spam\_1 & 0.0\% & 0.0\% & 0.0\% & 18.9  & 3.2   & 0.2 \\
    magic\_g & 0.1\% & 0.2\% & -0.1\% & 47.7  & 15.6  & 0.3 \\
    magic\_h & 0.6\% & 0.5\% & 0.1\% & 13.8  & 7.1   & 0.5 \\
    blocks\_1 & 0.0\% & 0.0\% & 0.0\% & 1.8   & 0.3   & 0.2 \\
    hand\_0  & 0.0\% & 0.0\% & 0.0\% & 1.8   & 0.3   & 0.2 \\
    hand\_1  & 0.0\% & 0.0\% & 0.0\% & 2.2   & 0.3   & 0.1 \\ \hline
    average &       &       & -0.7\% &       &       & 0.2 \\ 
 \hline
    \end{tabular}%
  \label{tab:compare}%
\end{table}%

\section{Conclusions}

In this paper, we consider the $L_1$-PCA problem minimizing the L1 reconstruction errors and present iterative algorithms, \textsf{wPCA} and \textsf{awPCA}, where \textsf{awPCA} is an approximation version of \textsf{wPCA} developed to avoid computationally expensive operations of SVD. The core of the algorithms relies on an iteratively reweighted least squares scheme and the expressions in \eqref{eqn_property_f_star}. Although the optimality of $L_1$-PCA was not able to be shown and remains unknown, we show that the eigenvalues of \textsf{wPCA} and \textsf{awPCA} converge and that eigenvectors of \textsf{awPCA} converge. In the computational experiment, we observe that \textsf{awPCA} outperforms all of the benchmark algorithms while the execution times are competitive. Out of the four algorithms designed to minimize the L1 reconstruction errors (\textsf{Ke}, \textsf{Brooks}, \textsf{wPCA}, \textsf{awPCA}), we observe that \textsf{awPCA} is the fastest algorithm with near-best solution qualities.

\bibliographystyle{IEEEtran}
\bibliography{l1pca}
%

\end{document}